
\documentclass[nohyperref]{article}

\usepackage{microtype}
\usepackage{graphicx}
\usepackage{subfigure}
\usepackage{booktabs} 

\usepackage{tabularx}
\usepackage{array}
\newcolumntype{x}[1]{>{\centering\let\newline\\\arraybackslash\hspace{0pt}}p{#1}}
\usepackage{multirow}

\usepackage{hyperref}


\usepackage{stackengine}
\newcommand\barbelow[1]{\stackunder[1.2pt]{$#1$}{\rule{.8ex}{.075ex}}}

\usepackage[accepted]{icml2022}


\usepackage{amsmath}
\usepackage{amssymb}
\usepackage{mathtools}
\usepackage{amsthm}

\usepackage[capitalize,noabbrev]{cleveref}

\theoremstyle{plain}
\newtheorem{theorem}{Theorem}[section]

\theoremstyle{definition}
\newtheorem{definition}[theorem]{Definition}

\theoremstyle{remark}

\usepackage[textsize=tiny]{todonotes}

\def\our{InterContiNet}
\def\ourtraining{Hyperrectangle Training}

\DeclareMathOperator*{\argmax}{arg\,max}
\DeclareMathOperator*{\argmin}{arg\,min}

\def\R{\mathbb{R}}
\def\N{\mathbb{N}}
\def\P{\mathbb{P}}

\def\e{\varepsilon}

\usepackage{pifont}
%
%

\icmltitlerunning{Continual Learning with Guarantees via Weight Interval Constraints}

\begin{document}

\twocolumn[

\icmltitle{Continual Learning with Guarantees via Weight Interval Constraints}



\icmlsetsymbol{equal}{*}

\begin{icmlauthorlist}

\icmlauthor{Maciej Wo\l{}czyk}{equal,uj}
\icmlauthor{Karol J. Piczak}{equal,uj}
\icmlauthor{Bartosz W\'ojcik}{uj}
\icmlauthor{\L{}ukasz Pustelnik}{uj}
\icmlauthor{Pawe\l{} Morawiecki}{ipipan}
\icmlauthor{Jacek Tabor}{uj}
\icmlauthor{Tomasz Trzci\'nski}{uj,pw,tpx}
\icmlauthor{Przemys\l{}aw Spurek}{uj}
\end{icmlauthorlist}

\icmlaffiliation{uj}{Faculty of Mathematics and Computer Science, Jagiellonian University, Kraków, Poland}
\icmlaffiliation{pw}{Warsaw University of Technology, Warsaw, Poland}
\icmlaffiliation{tpx}{Tooploox}
\icmlaffiliation{ipipan}{Institute of Computer Science, Polish Academy of Sciences, Warsaw, Poland}

\icmlcorrespondingauthor{Maciej Wołczyk}{maciej.wolczyk@doctoral.uj.edu.pl}

\icmlkeywords{Machine Learning, ICML}

\vskip 0.3in
]



\printAffiliationsAndNotice{\icmlEqualContribution} 

\begin{abstract}
    We introduce a new training paradigm that enforces interval constraints on neural network parameter space to control forgetting. 
    Contemporary Continual Learning (CL) methods focus on training neural networks efficiently from a stream of data, while reducing the negative impact of catastrophic forgetting, yet they do not provide any firm guarantees that network performance will not deteriorate uncontrollably over time. 
    In this work, we show how to put bounds on forgetting by reformulating continual learning of a model as a continual contraction of its parameter space. To that end, we propose~\ourtraining{}, a new training methodology where each task is represented by a hyperrectangle in the parameter space, fully contained in the hyperrectangles of the previous tasks. This formulation reduces the NP-hard CL problem back to polynomial time while providing full resilience against forgetting. 
    We validate our claim by developing \our{} (Interval Continual Learning) algorithm which leverages interval arithmetic to effectively model parameter regions as hyperrectangles. 
    Through experimental results, we show that our approach performs well in a continual learning setup without storing data from previous tasks. 
\end{abstract}
\section{Introduction}

Learning from a continuous stream of data is natural for humans, as new experiences come sequentially in our life. Yet, artificial neural network models fail to exhibit the very same skill~\cite{mccloskey1989catastrophic,ratcliff1990connectionist,french1999catastrophic,DBLP:journals/corr/GoodfellowMDCB13}. Although they generally deal well with solving increasingly complex tasks, their inability to acquire new knowledge without {\it catastrophically forgetting} what they learned previously is considered one of the critical roadblocks to reaching human-like intelligence.

\begin{figure}[t!]
\begin{center}
\subfigure[General CL setting.]
{
\label{fig:imp_dec_1_a}
  \scalebox{0.30}{
    \begin{tikzpicture}
\draw[fill=yellow] (-1,1) ellipse (1 and 2);

\node at (-1.5,0.0) { \large $W_1$};
\node at (-1.2,-0.3) { \large $\times$};

\node at (-1,4.3) { \LARGE $TASK \ 1$};


\begin{scope}
\clip (3.1,0.2) rectangle (4.2,2.8);
\clip (3,1) ellipse (1 and 2);
\fill[color=yellow] (3,1) ellipse (1 and 2);
\end{scope}

\draw (3,1) ellipse (1 and 2);

\draw[]  (3.1,0.2) rectangle (4.2,2.8);

\node at (2.5,0.0) { \large $W_1$};
\node at (2.8,-0.3) { \large $\times$};

\node at (3.15,2.5) { \large $W_2$};
\node at (3.45,2.2) { \large $\times$};

\draw [>=stealth] (2.8,-0.3) -- (3.45,2.2);

\node at (3,4.3) { \LARGE $TASK \ 2$};


\begin{scope}
\clip (7.1,0.2) rectangle (8.2,2.8);
\clip (7,1) ellipse (1 and 2);
\clip (9,0.3) -- (6.5,-1.5) -- (6.5,2.1) -- cycle;
\fill[color=yellow]  (7,1) ellipse (1 and 2);
\end{scope}

\node at (7.15,2.5) { \large $W_2$};
\node at (7.45,2.2) { \large $\times$};

\node at (7.15,1.0) { \large $W_3$};
\node at (7.45,0.7) { \large $\times$};

\draw [>=stealth] (7.45,2.2) --  (7.45,0.7);

\node at (7,4.3) { \LARGE $TASK \ 3$};

\draw (7,1) ellipse (1 and 2);

\draw  (7.1,0.2) rectangle (8.2,2.8);

\draw(9,0.3) -- (6.5,-1.5) -- (6.5,2.1) -- cycle;

\end{tikzpicture}
  } 
}
\subfigure[\our{} CL setting.]
{
\label{fig:imp_dec_1_b}
  \scalebox{0.30}{
    \begin{tikzpicture}

\draw[fill=yellow] (-5,1) ellipse (1 and 2);

\node at (-5.5,0.0) { \large $W_1$};
\node at (-5.2,-0.3) { \large $\times$};

\node at (-3,4.3) { \LARGE $TASK \ 1$};
\draw [dashed] (1,-2) -- (1,5);

\draw (-1,1) ellipse (1 and 2);
\draw[fill=orange]  (-1.8,-0.1) rectangle (-0.2,2.1);

\node at (-1.5,0.0) { \large $W_1$};
\node at (-1.2,-0.3) { \large $\times$};

\node at (-1.3,1.3) { \large $W_2$};
\node at (-1,1) { \large $\times$};

\draw [>=stealth] (-1.2,0) --  (-1,1);



\begin{scope}
\clip (3.8,-0.1) rectangle (2.2,2.1);
\clip (3.1,0.2) rectangle (4.2,2.8);
\fill[color=yellow] (3.8,-0.1) rectangle (2.2,2.1);
\end{scope}

\draw (3,1) ellipse (1 and 2);
\draw[]  (3.8,-0.1) rectangle (2.2,2.1);

\draw[]  (3.1,0.2) rectangle (4.2,2.8);

\node at (2.7,1.3) { \large $W_2$};
\node at (3,1) { \large $\times$};

\node at (3.15,1.5) { \large $W_3$};
\node at (3.45,1.2) { \large $\times$};

\draw [>=stealth] (3,1) --  (3.45,1.2);

\node at (5,4.3) { \LARGE $TASK \ 2$};
\draw [dashed] (9,-2) -- (9,5);


\begin{scope}
\clip (7.8,-0.1) rectangle (6.2,2.1);
\clip (7.1,0.2) rectangle (8.2,2.8);
\fill[color=orange] (7.8,-0.1) rectangle (6.2,2.1);
\end{scope}

\draw (7,1) ellipse (1 and 2);
\draw[]  (7.8,-0.1) rectangle (6.2,2.1);

\draw[]  (7.1,0.2) rectangle (8.2,2.8);

\node at (6.7,1.3) { \large $W_2$};
\node at (7,1) { \large $\times$};

\node at (7.15,1.5) { \large $W_3$};
\node at (7.45,1.2) { \large $\times$};

\draw [>=stealth] (7,1) --  (7.45,1.2);



\begin{scope}
\clip (11.8,-0.1) rectangle (10.2,2.1);
\clip (11.1,0.2) rectangle (12.2,2.8);
\fill[color=yellow] (13,0.3) -- (10.5,-1.5) -- (10.5,2.1) -- cycle;
\end{scope}

\draw (11,1) ellipse (1 and 2);
\draw  (11.8,-0.1) rectangle (10.2,2.1);

\draw  (11.1,0.2) rectangle (12.2,2.8);

\draw (13,0.3) -- (10.5,-1.5) -- (10.5,2.1) -- cycle;

\node at (13,4.3) { \LARGE $TASK \ 3$};

\node at (11.15,1.5) { \large $W_3$};
\node at (11.45,1.2) { \large $\times$};


\begin{scope}
\clip (15.8,-0.1) rectangle (14.2,2.1);
\clip (14.5, -0.4) rectangle (16.0, 1.05);
\clip (15.1,0.2) rectangle (16.2,2.8);
\fill[color=orange] (17,0.3) -- (14.5,-1.5) -- (14.5,2.1) -- cycle;
\end{scope}

\draw (15,1) ellipse (1 and 2);
\draw (15.8,-0.1) rectangle (14.2,2.1);

\draw (15.1,0.2) rectangle (16.2,2.8);

\draw (17,0.3) -- (14.5,-1.5) -- (14.5,2.1) -- cycle;
\draw (14.5, -0.4) rectangle (16.0, 1.05);


\node at (15.15,1.5) { \large $W_3$};
\node at (15.45,1.2) { \large $\times$};

\node at (15.15,0.45) { \large $W_4$};
\node at (15.45,0.75) { \large $\times$};

\draw [>=stealth] (15.45,1.2) --  (15.45,0.75);

\end{tikzpicture}
  } 
}
\end{center}
\vspace*{-1.0em}
\caption{
Visualization of parameter space regions that perform well on tasks learned sequentially. For a new task, weights $\theta_i$ must lie at the intersection of consecutive regions for the model to perform well on previous tasks. In the general CL setting, shown in Figure~\ref{fig:imp_dec_1_a} from~\cite{DBLP:conf/icml/KnoblauchHD20}, such areas can have arbitrary shapes. We introduce the \ourtraining{} paradigm where the parameter spaces are modeled with hyperrectangles, colored orange in Figure~\ref{fig:imp_dec_1_b}. This formulation puts guarantees on forgetting while reducing the calculation of NP-hard CL problem back to polynomial time.}
\vspace{-0.5em}
\label{fig:teaser}
\end{figure}

Continual learning is a rapidly growing field of machine learning that aims to solve this limitation and to bridge the gap between human and machine intelligence. 
Although several methods effective at reducing forgetting when learning new tasks were proposed~\cite{kirkpatrick2017overcoming,DBLP:conf/nips/LeeKJHZ17,li2017learning,LopezPaz2017GradientEM,Shin2017ContinualLW,zenke2017continual,aljundi2018memory,Masse2018AlleviatingCF,Liu2018RotateYN,Rolnick2019ExperienceRF,vandeVen2020BraininspiredRF}, they typically do not provide any firm guarantees about the degree of forgetting experienced by the model, which renders them inapt for safety-critical applications, such as autonomous driving or robotic manipulation. 
For instance, an autonomous car that was trained online during winter cannot forget how to drive in the snow, after acquiring new driving skills during summer. 
On the other hand, methods that do guarantee lack of forgetting, e.g.~\citet{rusu2016progressive,mallya2018packnet,Mallya2018PiggybackAA}, operate in conditions that are far from real life --- they assume that the task identity is known at inference, and hence we are allowed to adapt the network for each task. 

In this work, we identify those limitations of prior research and introduce a new learning paradigm for continual learning which puts strict bounds on forgetting, while being suitable for a more realistic single-head scenario. More specifically, we formulate continual learning as a constrained optimization problem, where when learning a new task $T$ the optimization needs to remain in the  parameter region that prevents performance drop on all previous tasks $1, \ldots, T-1$. In general, this problem is NP-hard \cite{DBLP:conf/icml/KnoblauchHD20} because the shapes of the viable parameter regions are highly irregular and overlaps are hard to identify. To overcome this problem, we propose to look for the intersections of hyperrectangles which are subsets of these regions, as displayed in Figure~\ref{fig:teaser}. We show that finding those intersections can be done in polynomial time, as long as we trim the parameter regions to fit the consecutive hyperrectangles. The resulting training paradigm, dubbed \ourtraining{}, is able to learn from a sequence of tasks while putting bounds on catastrophic forgetting. 

Although this simple idea is theoretically viable, finding hyperrectangles that follow these requirements and performing optimization with constrained parameter space is not trivial with the existing neural network models.
To solve this problem, we propose \our{}, a novel neural algorithm 
based on interval arithmetic which we use to implement the \ourtraining{}. Instead of optimizing individual network parameters, we consider intervals of parameter values, ensuring that every parameter value within the interval does not lead to performance deterioration with respect to the previous tasks.
More specifically, once training on task $T$ is finished, we use the obtained intervals as the hyperrectangle constraints for training the following tasks $(T+1, T+2, \ldots)$.
We show how \our{} can be incorporated within conventional neural network building blocks such as linear layers, convolutional layers, and ReLU activations. The resulting combination of \ourtraining{} and \our{} algorithm offers competitive results on a range of continual learning tasks we evaluated.

To summarize, our main contributions are the following:
\begin{itemize}
    \setlength\itemsep{0.1em}
    \item A novel paradigm of continual learning called \ourtraining{} that models optimal parameter space with hyperrectangles, allowing us to control catastrophic forgetting in neural networks and set guaranteed performance.
    We show that with adequate parameterization we can use this paradigm to train neural networks in polynomial time. 
    \item \our{}, a new algorithm that implements \ourtraining{} in practice, leveraging interval arithmetic of model parameters.
    \item Thorough experimental evaluation that confirms the validity of our method and its potential for putting guarantees on catastrophic forgetting in contemporary deep networks. 
\end{itemize}

\section{Related Works}

\paragraph{Continual learning}

Traditionally, continual learning approaches are grouped into three general families: regularization, dynamic architectures, and replay~\cite{parisi2019continual,delange2021continual}.

Regularization-based methods extend the loss function with additional terms slowing down changes in model parameters crucial for previous tasks. These terms can be introduced individually for each task or as a single overall value representative for the whole sequence (\textit{online} variant). Based on the type of regularization, a further subdivision splits these methods into prior- and data-focused approaches.

Prior-focused methods employ a prior on the model parameters when learning new tasks. The importance of individual parameters is estimated, i.a., with the Fisher information matrix, as shown in the seminal work on Elastic Weight Consolidation (EWC)~\cite{kirkpatrick2017overcoming}. A~similar approach, Synaptic Intelligence~\cite{zenke2017continual}, utilizes the whole learning trajectory to compute this importance measure. In contrast, the Memory Aware Synapses technique (MAS)~\cite{aljundi2018memory} approximates the importance of the parameters based on the gradients of the squared $L_2$-norm of the learned function output. Our proposed method bears a certain resemblance to these methods, as it also regularizes the model's parameters. However, we approach this problem differently by abandoning the soft quadratic penalties and instead introducing hard constraints based on the interval propagation loss. This allows us to provide guarantees on the worst-case level of forgetting.

Data-focused methods instead perform knowledge distillation from models trained on previous tasks when learning on new data. For instance, the Learning without Forgetting paradigm (LwF)~\cite{li2017learning} employs an additional distillation loss based on the comparison of new task outputs generated by the new and old models.

Regularization-based methods have a number of advantages. They work without changing the structure of the model and without storing any examples of the old data, which might be crucial due to data privacy issues. However, as they only impose a soft penalty, both the prior- and data-focused regularization methods cannot entirely prevent the forgetting of previous knowledge.

Such guarantees can be provided instead by approaches with dynamic architectures, which dedicate separate model branches to different tasks. These branches can be grown progressively, such as in the case of Progressive Neural Networks~\cite{rusu2016progressive}. Alternatively, a static architecture can be reused with iterative pruning proposed by PackNet~\cite{mallya2018packnet} or by using Supermasks in Superposition~\cite{Wortsman2020SupermasksIS}. A major practical drawback of these methods is that they require the knowledge of the actual task identity during inference, which is problematic in more realistic scenarios. Therefore, due to this limitation, we do not consider them further in our analysis.

A different, very successful approach in continual learning is replaying some form of old data during incremental training to maintain the knowledge acquired in the past~\cite{LopezPaz2017GradientEM,Shin2017ContinualLW,Rolnick2019ExperienceRF}. This process can use actual raw examples from the previous tasks or samples generated synthetically by a separate model. However, this paradigm is inapplicable in various applications, such as working with private medical data. Thus, in further experiments, we limit our comparisons to approaches based on regularization.

Finally, a recent work~\cite{DBLP:conf/iclr/MirzadehFGP021} addressed catastrophic forgetting by exploiting the linear connectivity between solutions obtained through multitask and continual learning in the form of Mode Connectivity SGD. Our work shares its fundamental motivation, i.e. driving the optimization process to parameter subspaces appropriate for the whole sequence of tasks. However, in contrast to~\citet{DBLP:conf/iclr/MirzadehFGP021}, we do not employ any kind of replay buffer, but we establish the solution boundaries based on interval arithmetic.

\paragraph{Interval arithmetic for neural networks}
In deep learning, interval arithmetic was used in three main areas. First of all, we can use interval arithmetic to deal with the situation where we have only uncertain information about the data.  
\citet{chakraverty2014interval} presented an interval artificial neural network (IANN) that can handle input and output data represented as intervals. This architecture was further adapted for viruses data with uncertainty modeled by intervals~\cite{chakraverty2017novel,sahoo2020structural}.

In~\citet{gowal2018effectiveness,morawiecki2019fast} interval arithmetic was used to produce neural networks that are robust against adversarial attacks. In such an approach authors used worst-case cross-entropy to train the classification model.
In~\citet{proszewska2021hypercube}, the architecture was employed for representing voxels in 3D shape modeling.

In contrast to these previous works, we apply interval arithmetic to weights instead of input data.

\section{ \ourtraining{}} \label{sec:theoretical}

In this section, we will present \ourtraining{} for the continual learning problem. The main idea is to constrain the parameter search within the set of parameters for which any particular solution is valid for the previous tasks. In turn, we can guarantee that any parameter obtained this way will still be a valid solution for the previous task, thus putting bounds on forgetting.
Although explicitly representing the region of valid solutions is not tractable, we show that using interval arithmetic we can efficiently compute the upper bound of the worst-case loss in this region to facilitate practical implementations. This general approach is presented in Figure~\ref{fig:teaser}.

\paragraph{The continual learning problem}
Assuming a sequence of tasks $T_i$ for $i  = 1, \ldots, k$ we formulate the continual learning objective for task $T_j$ following \citet{DBLP:conf/iclr/ChaudhryRRE19} as:
\begin{equation}
    \argmin_\theta \ell(T_j, \theta) \text{ satisfying } \ell(T_m, \theta) \leq \ell(T_m, \theta_m^*),
    \label{eq:cl_equation}
\end{equation} 
for all $ m  = 1, \ldots, j - 1$ , where $\theta_m^*$ are the parameters obtained directly after learning task $m$ and $\ell$ is the cross-entropy loss over the whole task dataset $\ell(T_j, \theta) = \frac{1}{|T_j|} \sum_{(x, y) \in T_j} l(\phi_\theta(x), y)$, where by $\phi_\theta(x)$ we denote the (pre-softmax) logits produced by the neural network $\phi$ with parameters $\theta$.
In other words, we would like to train our model on $j$-th task without reducing the performance of previous tasks $m = 1, \ldots, j-1$. 

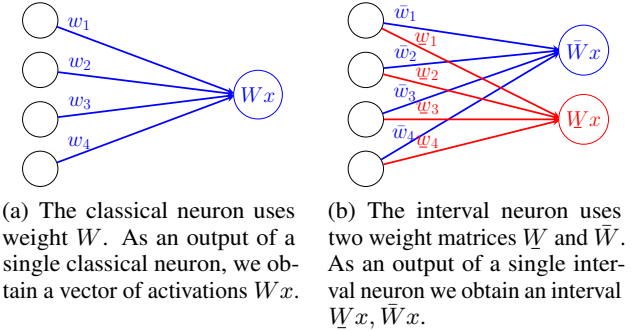
\begin{figure}[t]
\begin{center}
\subfigure[The classical neuron uses weight $W$. As an output of a single classical neuron, we obtain a vector of activations $Wx$. ]
{
\label{fig:neurons_a}
  \scalebox{0.47}{
    \tikzset{%
  input neuron/.style={
    circle,
    draw,
    minimum size=1cm
  },
  neuron missing/.style={
    draw=none, 
    scale=4,
    text height=0.333cm,
    opacity=.0,
    execute at begin node=\color{black}$\vdots$
  },
}

\begin{tikzpicture}[x=1.4cm, y=1.4cm, >=stealth]

\foreach \m/\l [count=\y] in {1,...,4}
  \node [input neuron/.try, neuron \m/.try] (input-\m) at (0,1.0-\y) {};

\foreach \m [count=\y] in {1}
  \node [every neuron/.try, neuron \m/.try ] (hidden1-\m) at (4,-0.5-\y) {};


\foreach \i in {1,...,4}
  \foreach \j in {1}
    \draw [->, blue, line width=0.5mm] (input-\i) -- (hidden1-\j) node [above, very near start] {\LARGE $w_\i$};

  \foreach \m [count=\y] in {1}
  \node [draw, circle, blue, align=center, minimum size=1cm] (hidden1-\m) at (4.4,-0.5-\y) { \LARGE $ W x $};

\end{tikzpicture}
  } 
}
\quad
\subfigure[The interval neuron uses two weight matrices $\barbelow W$ and $\bar W$. As an output of a single interval neuron we obtain an interval $\barbelow W x,  \bar W  x$. ]
{
\label{fig:neurons_b}
  \scalebox{0.47}{
    \tikzset{%
  input neuron/.style={
    circle,
    draw,
    minimum size=1cm,
  },
}

\begin{tikzpicture}[x=1.4cm, y=1.4cm, >=stealth]

\foreach \m/\l [count=\y] in {1,2,3,4}
  \node [input neuron/.try, neuron \m/.try] (input-\m) at (0,0.8-\y) {};

\foreach \m [count=\y] in {1}
  \node [every neuron/.try, neuron \m/.try ] (hidden1-\m) at (4,0.2-\y) {};

\foreach \i in {1,...,4}
  \foreach \j in {1}
    \draw [->,blue, line width=0.5mm] (input-\i) -- (hidden1-\j) node [above, very near start] { \LARGE $\bar w_\i$};
    

\foreach \m [count=\y] in {1}
  \node [every neuron/.try, neuron \m/.try ] (hidden2-\m) at (4,-1.2-\y) {}; 
  

\foreach \i in {1,...,4}
  \foreach \j in {1}
    \draw [->,red, line width=0.5mm] (input-\i) -- (hidden2-\j) node [above,  near start] {\LARGE $\barbelow w_\i$};
    
  
  \foreach \m [count=\y] in {1}
  \node [draw, circle, blue, align=center, minimum size=1cm] (hidden1-\m) at (4.4,0.2-\y) {\LARGE $ \bar W x $};
  \foreach \m [count=\y] in {1}
  \node [draw, circle, red, align=center, minimum size=1cm] (hidden1-\m) at (4.4,-1.2-\y) {\LARGE $ \barbelow W x $};

\end{tikzpicture}
  }
}
\end{center}

\caption{ Comparison between the classical neuron and the interval neuron. In the interval neuron, we have two sets of weights describing the intervals.}
\label{fig:neurons}
\end{figure}
    
Since this constraint is difficult to satisfy given the limited access to data from previous tasks, most continual learning methods transform it into a soft penalty.  In particular, regularization-based continual learning methods such as EWC \cite{kirkpatrick2017overcoming} usually use a soft approximation of the forgetting constraint. While this approach reduces changes in weights which are considered important for the previous task, it does not completely eliminate forgetting. Rephrasing this in the context of stability-plasticity trade-off, most methods allow for better plasticity at the cost of sometimes sacrificing stability. 
However, in many applications, such as medical imaging, autonomous vehicles, or robotics, even slight hints of forgetting are not acceptable. Our main goal is to create a CL method that does not allow any forgetting while maintaining good plasticity.

\paragraph{ \ourtraining{}{} }
In order to fully realize the constraint in Eq. \eqref{eq:cl_equation} we change the approach of thinking about the parameter space of the neural networks, switching the focus from finding particular points $\theta \in \mathbb{R}^D$ in the parameter space to reasoning about whole regions $\Theta \subset \mathbb{R}^D$. In other words, in the continual learning problem of optimizing over $T_1, \ldots T_n$, when training on task $T_j$, we are interested in finding:
\begin{equation}
    \begin{split}
        \Theta_{j + 1} = &\argmin_\Theta \max_{\theta \in \Theta_j} \ell(T_{j + 1}, \theta) \\
        &\text{ satisfying } \|\Theta\| > C_{j+1}, \Theta \subseteq \Theta_j
        \label{eq:region_optimization_continual}
    \end{split}
\end{equation} 
where $\|\Theta\|$ denotes some notion of the size of the set, e.g. its volume, $C_{j+1}$ is a hyperparameter denoting the minimal required size of that set, and we set $\Theta_0 = \mathbb{R}^D$ to be the whole domain of real vectors.
With this formulation, since $\Theta_{j+1} \subseteq \Theta_{j}$ we can easily see that for any $\theta \in \Theta_{j+1}, \ell(T_j, \theta) \leq \max_{\theta' \in \Theta_j} \ell(T_j, \theta')$. Thus, this approach allows us to provide guarantees about the level of forgetting.

However, applying this idea directly to neural networks is intractable. In fact, as \citet{DBLP:conf/icml/KnoblauchHD20} noted, even finding the intersection between regions $\Theta_j$ and $\Theta_{j+1}$ of an arbitrary shape is an NP-hard problem. In order to bring the problem back to tractable regime, we try to instead find  $\bar{\Theta} \subseteq \Theta$, where $\bar{\Theta}$ has a restricted shape that can be easily intersected. That is, we want to find:
\begin{equation}
    \begin{split}
        \bar{\Theta}_{j} &= \argmax_{\bar{\Theta}} \| \bar{\Theta} \| \text{ satisfying } \bar{\Theta} \subseteq \Theta, \bar{\Theta} \in \mathcal{G},
    \end{split}
\end{equation}
where $\mathcal{G}$ is a family of sets with required properties. For example, in order to solve the problem of NP-hard intersections we could use $\mathcal{G}$ as the set of convex polytopes. However, this restriction is not enough to make optimization of Eq.  \eqref{eq:region_optimization_continual} tractable. Instead, we restrict this family even further, and we set $\mathcal{G}$ to be a set of hyperrectangles, which, as we will show, allows us to develop a viable algorithm for training neural networks in the continual learning setting. With this formulation, we lose some plasticity as we are not able to represent points in the region $\Theta_j \setminus \bar{\Theta}_j,$ but in the experimental section we show that the final model still has enough expressiveness to solve difficult tasks.

Using this formulation, we will now show how to apply the \ourtraining{} to continual learning problem with neural networks. In particular, we will demonstrate that for standard modern neural network architectures we can find a tractable and easily differentiable upper bound of $\max_{\theta \in \Theta} \ell(T_j, \theta)$ which will in turn allow us to perform optimization of Eq. \eqref{eq:region_optimization_continual}.

\paragraph{Interval arithmetic for neural networks}

Let us consider the family of feed-forward neural networks trained for classification tasks. We assume that the neural network $\Phi(x; \theta)$ is defined as a sequence of transformations $h_l$ for each of its $L$ layers. The output $ h_L( h_{L-1}( \ldots h_1(x)  ) ) = z_L \in \R^N$ has $N$ logits corresponding to $N$ classes.

In \ourtraining{} we assume that weights of each transformation $h_k$ are located arbitrarily  in the hyperrectangle $[\barbelow W, \bar W]$. For a particular task $T_j$, the Cartesian product of all intervals $[\barbelow W_1, \bar W_1] \times [\barbelow W_2, \bar W_2] \times \ldots [\barbelow W_L, \bar W_L] $ forms the hyperrectangle $\Theta_j$. With this setting we aim to find an upper bound on $\max_{\theta \in \Theta} \ell(T_j, \theta)$. 

Observe that for a given input $x$ and particular interval layer $[\barbelow W, \bar W]$ we can bound the possible outputs of that layer. As such, we can bound not only the weights of our model but also the activations. Thus, we will use interval arithmetic as our main tool to formalize this problem.

Interval arithmetic  \cite{dahlquist2008numerical}(Chapter 2.5.3) is based on the operations on segments.
Let us assume $A$ and $B$ are numbers expressed as intervals. For all $\bar a, \barbelow a, \bar b, \barbelow b \in \R$ where 
$A = [\barbelow a, \bar a]$ , $B = [\barbelow b, \bar b]$, we can define operations such as \cite{lee2004first}:
\begin{itemize}
    \item addition:
    $
     [\barbelow a, \bar a] +  [\barbelow b, \bar b] = [\barbelow a + \barbelow b, \bar a + \bar b]
    $
    \item multiplication:
    $
     [\barbelow a, \bar a] *  [\barbelow b, \bar b] = [\min( \barbelow a * \barbelow b, \barbelow a * \bar b, \bar a * \barbelow b, \bar a * \bar b),
     \max( \barbelow a * \barbelow b, \barbelow a * \bar b, \bar a * \barbelow b, \bar a * \bar b )
     ]
    $      

\end{itemize}
Thus, we can use interval arithmetic to perform affine transformations, which in turn allows us to implement fully-connected and convolutional layers of neural networks. 
Let us consider the interval version of the classical dense layer:
$$
h(x) = Wx + b.
$$
In \ourtraining{} a vector of weights is a vector of intervals and consequently the output of the dense layer is also a vector of intervals:
$$
[ \barbelow h(x), \bar h(x)] = [  \barbelow W , \bar W ] [x, x ].
$$

\our{}  is defined by a sequence of transformations $h_l$ for
each of its $L$ layers. That is, for an input $z_0 = [x,x]$, we have
$$
[\barbelow z_l, \bar z_l] = h_l([\barbelow z_{l-1}, \bar z_{l-1}]) \mbox{ for } l = 1, \ldots, L. 
$$
The output $[\barbelow z_L, \bar z_L] $ has $N$ interval logits corresponding to $N$
classes.

Propagating bounds through any element-wise monotonic activation function (e.g., ReLU, tanh, sigmoid) is trivial. Concretely, if $h_l$ is an element-wise non-decreasing function, we have:
$$
\barbelow z_{l} = h_{l}( \barbelow z_{l-1}),  \qquad
\bar z_{l} = h_{l}( \bar z_{l-1}).
$$
As such, we can use interval arithmetic to push any given input $x$ through the whole network and obtain outputs of $N$ logits intervals corresponding to $N$ classes $[\barbelow z_L, \bar z_L]$.  In other words, we can find $z_L(x; \Theta_j) = [\barbelow z_L, \bar z_L]$ such that $\phi(x; \theta) \in z_L(x; \Theta_j)$ for any $\theta \in \Theta_j$.

\paragraph{Upper bound of maximum loss over a region}
Thanks to the interval arithmetic, we can now find an  upper-bound of $\max_{\theta \in \Theta} \ell(T_j, \theta)$ which is efficient to compute. 

\begin{theorem}
Let $\Phi$ be a neural network, $\Theta$ be a region of parameter space of that neural network, and $z_L(x, \Theta)$ be a function returning a hyperrectangle such that $\Phi(x, \theta) \in z_L(x, \Theta)$ for any $\theta \in \Theta$. Define worst-case interval-based loss as:
$
\hat{l}(x, y; \Theta) =  l(\hat{z}_L, y), 
$
where $\hat{z}_L$ is a vector with each element defined as:
$$
\hat{z}_L^{(i)} = \left\{
\begin{array}{ll}
     \overline{z}_L^{(i)}, & \mbox{ for } y \neq y,\\[0.8ex]
     \underline{z}_L^{(y)}, & \mbox{ otherwise }.
\end{array}
\right.
$$

Then $\hat{l}(x, y; \Theta) \geq \max_{\theta \in \Theta} l(x, \theta)$.
\label{theorem:single_data}
\end{theorem}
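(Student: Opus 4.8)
The plan is to reduce the maximization $\max_{\theta\in\Theta} l(\Phi(x,\theta),y)$ to the evaluation of the loss at a single vertex of the logit hyperrectangle $z_L(x,\Theta)=[\underline{z}_L,\overline{z}_L]$, by exploiting coordinatewise monotonicity of the cross-entropy loss. Recall that for logits $z\in\R^N$ the per-example cross-entropy is $l(z,y)=-\log\operatorname{softmax}(z)^{(y)}=\log\!\big(\textstyle\sum_{i=1}^N e^{z^{(i)}}\big)-z^{(y)}$. First I would record the elementary monotonicity facts: $\partial l(z,y)/\partial z^{(y)}=\operatorname{softmax}(z)^{(y)}-1\le 0$, so $l(\cdot,y)$ is non-increasing in the true-class logit $z^{(y)}$, while for every $i\neq y$ we have $\partial l(z,y)/\partial z^{(i)}=\operatorname{softmax}(z)^{(i)}\ge 0$, so $l(\cdot,y)$ is non-decreasing in each of the remaining logits. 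Equivalently, writing $l(z,y)=\log\!\big(1+\sum_{i\neq y}e^{z^{(i)}-z^{(y)}}\big)$, each summand is non-decreasing in $z^{(i)}$ and non-increasing in $z^{(y)}$, and $\log$ is increasing.

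Since these signs are uniform over all of $\R^N$, $l(\cdot,y)$ is separately monotone in each coordinate on the whole box $[\underline{z}_L,\overline{z}_L]$, so its maximum over the box is attained at the corner obtained by sending each coordinate to the endpoint that does not decrease $l$: namely $z^{(i)}=\overline{z}_L^{(i)}$ for $i\neq y$ and $z^{(y)}=\underline{z}_L^{(y)}$. This corner is exactly the vector $\hat{z}_L$ in the statement (the first branch of its definition should read ``for $i\neq y$''), so $\max_{z\in z_L(x,\Theta)} l(z,y)=l(\hat{z}_L,y)=\hat{l}(x,y;\Theta)$.

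Finally, the defining property of $z_L(x,\Theta)$ gives $\Phi(x,\theta)\in z_L(x,\Theta)$ for every $\theta\in\Theta$, hence $\max_{\theta\in\Theta} l(\Phi(x,\theta),y)\le\max_{z\in z_L(x,\Theta)} l(z,y)$; chaining this with the previous paragraph yields $\max_{\theta\in\Theta} l(\Phi(x,\theta),y)\le\hat{l}(x,y;\Theta)$, which is the claim. The only delicate point is the passage from ``each partial derivative has a fixed sign'' to ``the box-maximum sits at the stated corner'': this uses the fact that separable monotonicity on a product set lets one optimize coordinate by coordinate, and it is also where one must check that the sign of $\partial l/\partial z^{(i)}$ genuinely does not depend on the location inside the box --- which holds, since it equals a softmax probability (respectively, a softmax probability minus one), so no local argument or interior critical point can interfere.
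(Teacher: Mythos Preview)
Your proof is correct and follows essentially the same two-step structure as the paper's: first argue that the cross-entropy over the logit box $z_L(x,\Theta)$ is maximized at the corner $\hat{z}_L$, then use the containment $\{\Phi(x,\theta):\theta\in\Theta\}\subseteq z_L(x,\Theta)$ to bound the max over $\Theta$. The only difference is that you justify the corner-maximization step more carefully, by computing the partial derivatives $\partial l/\partial z^{(i)}$ explicitly (or equivalently writing $l(z,y)=\log\bigl(1+\sum_{i\neq y}e^{z^{(i)}-z^{(y)}}\bigr)$) to establish coordinatewise monotonicity, whereas the paper argues this point informally. You also correctly flag the typo ``$y\neq y$'' for ``$i\neq y$''.
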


\begin{proof}

First, we will show that that $\hat{z}_L$ is the maximum of the cross-entropy loss, i.e. $ \hat{z}_l = \argmax_{z \in z_L(x, \Theta)} l(z, y) $. 
From the definition of $\hat{l}$, for any $\tilde{z} \in z_L(x, \Theta)$, for the worst-case prediction of the true class (when $i = y$) we have 
$
\begin{array}{c}
\tilde{z}^{(y)} \geq \hat z_L^{(y)}.
\end{array}
$
Similarly, for the incorrect class ($i \neq y$)
we have 
$
\begin{array}{c}
\tilde{z}^{(i)} \leq \hat z_L^{i}.
\end{array}
$ 
When we consider coordinate connected with correct label $y=i$ cross-entropy will be larger than any other elements from interval since we return a lower prediction on the correct class. Analogically, when we consider coordinate connected with an incorrect label $i \neq y $ any element larger than the minimum of the interval, we obtain lower cross-entropy since we increase the probability of false classes. From that, it is evident that $\hat{z}_L$ is the maximum argument in this hyperrectangle.

Now, denote the set of possible logits produced by the neural network over the region $\Theta$ as $\Phi(x) = \{\phi(x, \theta) \mid \theta \in \Theta\}$. Since we know that $\phi(x, \theta) \in z_L(x, \Theta)$, we see that $\Phi(x) \subseteq z_L(x, \Theta)$. Since $\hat{z}$ is the maximum over the set $z_L(x, \Theta)$, then it must also be the maximum over its subset $\Phi(x)$.
\end{proof}

This gives us the worst-case scenario over a parameter region for a particular example. It is now straightforward to define $\hat{\ell}(T_j, \Theta) = \frac{1}{|T_j|} \sum_{(x, y) \in T_j} \hat{l}(x, y; \Theta)$ to be the worst-case scenario for the whole task. 
Now we can prove that this $\hat{\ell}$ is an upper bound of the whole continual learning objective.

\begin{theorem}
Assume a sequential learning problem described by \eqref{eq:region_optimization_continual}. Then during task $j$ for any $k < j$, we have $\hat{\ell}(T_{k}, \Theta_j) \geq \max_{\theta' \in \Theta_j} \ell(T_k, \theta') \geq \max_{\theta' \in \Theta_k} \ell(T_k, \theta') $ 
\label{theorem:whole_task}
\end{theorem}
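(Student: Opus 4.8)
The plan is to get the statement as a short consequence of Theorem~\ref{theorem:single_data} together with the nested structure of the feasible regions produced by~\eqref{eq:region_optimization_continual}. Two inequalities need to be established. The substantive one is $\hat{\ell}(T_k,\Theta_j)\ge\max_{\theta'\in\Theta_j}\ell(T_k,\theta')$: it says that the interval-based worst-case loss, now evaluated on the \emph{current} region $\Theta_j$, still dominates the true worst-case loss suffered on the \emph{old} task $T_k$ by any parameter that remains admissible. The second step only compares $\max_{\theta'\in\Theta_j}\ell(T_k,\theta')$ with $\max_{\theta'\in\Theta_k}\ell(T_k,\theta')$, that is, two maxima of one fixed function over the nested sets coming from the recursion.

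For the first inequality I would reason example by example and then average. Fix $(x,y)\in T_k$. Since $\Theta_j$ is a hyperrectangle, the interval-propagation construction described above provides a hyperrectangle $z_L(x,\Theta_j)$ with $\phi(x,\theta)\in z_L(x,\Theta_j)$ for every $\theta\in\Theta_j$, which is exactly the hypothesis of Theorem~\ref{theorem:single_data} (and any valid enclosure would serve, since $\hat{\ell}$ is defined relative to the chosen one); that theorem then gives
$$
\hat{l}(x,y;\Theta_j)\ \ge\ \max_{\theta'\in\Theta_j} l\bigl(\phi(x,\theta'),y\bigr).
$$
Averaging over the $|T_k|$ examples yields $\hat{\ell}(T_k,\Theta_j)\ge\frac{1}{|T_k|}\sum_{(x,y)\in T_k}\max_{\theta'\in\Theta_j} l(\phi(x,\theta'),y)$, and it remains to pull the maximum out of the sum. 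Taking $\theta^{\star}\in\argmax_{\theta'\in\Theta_j}\ell(T_k,\theta')$ (attained because $\Theta_j$ is compact and $\ell(T_k,\cdot)$ is continuous), every summand satisfies $\max_{\theta'\in\Theta_j} l(\phi(x,\theta'),y)\ge l(\phi(x,\theta^{\star}),y)$, so summing and normalizing gives $\frac{1}{|T_k|}\sum_{(x,y)}\max_{\theta'\in\Theta_j}l(\phi(x,\theta'),y)\ge\ell(T_k,\theta^{\star})=\max_{\theta'\in\Theta_j}\ell(T_k,\theta')$; chaining the two estimates completes this part.

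For the remaining comparison I would unroll~\eqref{eq:region_optimization_continual}: the constraint $\Theta_{i+1}\subseteq\Theta_i$ gives $\Theta_j\subseteq\Theta_{j-1}\subseteq\cdots\subseteq\Theta_k$ whenever $k<j$, so the two maxima are related purely through the monotonicity of $\Theta\mapsto\max_{\theta\in\Theta}\ell(T_k,\theta)$ under inclusion of the admissible set, and no property of $\ell$ beyond its being a fixed function is needed. I do not expect a genuine obstacle once Theorem~\ref{theorem:single_data} is in hand; the points that need care are (i) instantiating the generic enclosure hypothesis of Theorem~\ref{theorem:single_data} with the hyperrectangle $\Theta_j$ and using the \emph{same} enclosure inside $\hat{\ell}(T_k,\Theta_j)$, (ii) attainment of the various maxima (compactness of $\Theta_j$ and $\Theta_k$, continuity of $\ell$), and (iii) the quantifier order in the example-to-task step --- ``average of the pointwise maxima $\ge$ maximum of the average'' --- which is the one place where the inequality could be reversed by accident.
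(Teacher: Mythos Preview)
Your argument is essentially the paper's own: apply Theorem~\ref{theorem:single_data} pointwise, average, swap ``average of maxima'' for ``maximum of the average,'' and then invoke the nesting $\Theta_j\subseteq\Theta_k$ for the second step---exactly the chain the paper writes out, with your version adding the extra care on attainment and the explicit choice of $\theta^{\star}$.

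One remark worth recording: the monotonicity you cite, and the containment $\Theta_j\subseteq\Theta_k$ that the paper invokes, actually yield $\max_{\theta'\in\Theta_j}\ell(T_k,\theta')\le\max_{\theta'\in\Theta_k}\ell(T_k,\theta')$, i.e.\ the reverse of the second inequality as printed in the statement. This is a sign issue in the theorem's display rather than a defect in your reasoning---your argument and the paper's proof agree with each other and both deliver the $\le$ direction, which is also the direction that makes sense for the intended ``no-forgetting'' guarantee (the worst case over the shrunken region is no worse than the worst case certified at task $k$).
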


\begin{proof}
 The first inequality follows from Theorem \ref{theorem:single_data}:

\begin{equation}
\begin{split}
\hat{\ell}(T_j, \Theta)) &= \frac{1}{|T_j|} \sum_{(x, y) \in T_j} \hat{l}(x, y; \Theta) \\
                         & \geq \frac{1}{|T_j|} \sum_{(x, y) \in T_j} \max_{\theta \in \Theta} l(x, y; \theta) \\
                         & \geq \max_{\theta \in \Theta} \frac{1}{|T_j|}  \sum_{(x, y) \in T_j}  l(x, y; \theta) \\
                         & = \max_{\theta \in \Theta} \ell(T_j, \theta)
\end{split}
\end{equation}

The second inequality follows directly from the fact that $\Theta_j \subseteq \Theta_k$.
\end{proof}

Analogously to $\hat{\ell}$, we can define $\mathrm{\hat{acc}}(T_j, \Theta_j)$ to be the interval lower bound on minimum accuracy for task $T_j$ over a region of parameter space $\Theta_j$, i.e. $\mathrm{\hat{acc}}(T_j, \Theta_j) \leq \min_{\theta \in \Theta_j} \mathrm{acc}(T_j, \theta)$. Since for each $k > j$, $\Theta_{k} \subseteq \Theta_j$, we can guarantee that throughout the training the accuracy for $j$-th task will not fall below $\mathrm{\hat{acc}}(T_j, \Theta_j)$.

To summarize, we have shown that by using \ourtraining{} we obtain guarantees on non-forgetting since the interval from task $j$ is inside segments from the previous $j - 1$-th task. Moreover, we can solve such a problem in polynomial time, since we restrict our consideration to hyperrectangles. Finally, in Theorem \ref{theorem:whole_task} we demonstrate that the bounds on logits provided by interval arithmetic on neural networks can be used to calculate an efficient upper bound on the whole continual learning objective.

The result presented here is relevant to \citet{DBLP:conf/icml/KnoblauchHD20} who have shown that classical continual learning problem is in general NP-hard because of the highly irregular shapes of the viable regions of parameters. In Section~\ref{sec:theorem} we present a more rigorous treatment of the above derivation, using the setting and assumptions proposed in \citet{DBLP:conf/icml/KnoblauchHD20}.

\section{ \our{} }\label{sec:training}

\begin{algorithm}[t]
\caption{\our{} training procedure for a given task} \label{Alg:train}
\begin{algorithmic}
\STATE Input: model trained on the previous task (weights $W^*$, radii $\e^*$), current task $T_j$. 
\STATE Reparameterize $W, \e$ using Eq. \eqref{eq:interval_reparam}.
\FOR{epoch in $1 \ldots \mathrm{center\_epochs}$} 
    \STATE Update $\mu$ by minimizing $\ell(T_j, \Theta)$    \hfill \COMMENT{Train centers}
\ENDFOR
\STATE Initialize $\e$ as largest possible within the previous interval.
\FOR{epoch in $1 \ldots \mathrm{radii\_epochs}$} 
    \STATE Update $\nu$  by minimizing $\hat{\ell}(T_j, \Theta)$ \hfill \COMMENT{Train radii}
    \IF{$\hat{\mathrm{acc}} \geq \mathrm{acc} \cdot \mathrm{acc\_thresh}$}
        \STATE break
    \ENDIF
\ENDFOR
\STATE \textbf{return} $W$, $\e$
\end{algorithmic}
\end{algorithm}

In the previous section, we have shown that by considering the intervals of activations in a neural network we can derive an upper bound of the maximum loss over a hyperrectangle. Now, we will present a model which minimizes this upper bound in order to approximate the optimization presented in Eq. \eqref{eq:region_optimization_continual}.

\paragraph{Parametrization}

We start by taking a standard feed-forward neural network $\Phi$ with parameters $\theta$, but instead of considering points in the parameter space, we consider regions $\Theta$.
In particular, we describe the $k$-th parameter of the network with the center $W_k$ and the interval radius $\e_k$. Thus, we consider the region:
$$
[\barbelow W_k, \bar W_k] = [W_k - \e_k, W_k + \e_k],
$$
With this formulation, we can still use this network as a standard non-interval model by only using the center weights $W_k$, which will, in turn, produce only the center activations for each layer.
At the same time, we can also keep track of the interval $[\barbelow z_l, \bar z_l]$ by using the interval arithmetic. These operations are independent of each other, so as explained in \citet{gowal2018effectiveness} we can implement them efficiently for GPUs by calculating the center prediction and the intervals in parallel. In practice, we reimplement the basic blocks of neural networks (fully-connected layers, convolutional layers, activations, pooling, etc.) in order to handle interval inputs and interval weights.

With this formulation, we are able to implement most operations in contemporary standard neural network architectures. However, one important exception is batch normalization which uses running statistics of activations in inference mode. Since it is impossible to foresee how these statistics will change in the future, providing reliable forgetting guarantees with interval arithmetic is infeasible. As such, we restrict ourselves to simpler architectures without batch normalization.

During the training of the second and every subsequent task, we need to make sure that the intervals for the current task are contained within intervals from the previous task. Starting from the second task, we keep the previous interval centers $W_k^*$ and radii $\e_k^*$. Then, we reparameterize the training as:
\begin{equation}
\begin{split}
W_k &= W_k^* + \tanh(\mu_k) \cdot \e_k^*  \\
\e_k &= \sigma(\nu_k) \cdot
        \!\begin{aligned}[t]
           \min( & (  W_k^* + \e_k^*) - W_k, \\
           & W_k - (W_k^* - \e_k^*)).
         \end{aligned} 
\label{eq:interval_reparam}
\end{split}
\end{equation}
Observe that we cannot simply set the radius to be $\sigma(\nu_k) \cdot \e_k^*$ as depending on the position on the center $W_k$ the interval $[W_k - \e_k, W_k + \e_k]$ might not be fully contained inside the previous task interval $[W^*_k - \e^*_k, W^*_k + \e^*_k]$. We train $\mu_k$ and $\nu_k$ instead of directly optimizing $W_k$ and $\e_k$.

\paragraph{Training}

When training \our{} in a continual learning setting, we divide training of each task $T_j$ into two phases. In the first phase, we focus on finding the interval centers $W_k$ and keep the radii $\e_k$ frozen. We do this by simply minimizing the cross-entropy loss, as done in standard classification neural network training. In the second phase, we freeze the centers $W_k$ and initialize $\e_k$ with the maximal value that still fits within the previous interval\footnote{For the first task the maximum radius is a hyperparameter to be set.}. Then, we minimize the upper-bound loss $\hat{\ell}(T_j, \Theta_j)$ by optimizing $\e_k$.
Optimizing this upper bound indefinitely would lead to minimizing $\e_k$ to zeroes and producing degenerated intervals. To prevent this, we introduce an interpretable hyperparameter $\mathrm{acc\_thresh}$ which allows us to choose the fraction of examples in the current task that is guaranteed to be classified correctly for the rest of the training. For that purpose, we use the worst-case accuracy $\hat{\mathrm{acc}}$ described in Section~\ref{sec:theoretical} and minimize the worst-case loss $\hat{l}$ until the condition $\hat{\mathrm{acc}} \geq \mathrm{acc} \cdot \mathrm{acc\_thresh}$ is satisfied, with $\mathrm{acc}$ being the train accuracy on the current task. We use running averages on the last few batches to efficiently estimate $\hat{\mathrm{acc}}$ and $\mathrm{acc}$. As such, $\mathrm{acc\_thresh}$ is the variable that controls the stability-plasticity trade-off in our model. A high value of $\mathrm{acc\_thresh}$ will guarantee less forgetting in the end, but will also result in a smaller parameter region $\Theta_j$, which will be used to constrain the parameter search in the next task.
Algorithm \ref{Alg:train} shows the whole training procedure for a single task. 

\paragraph{Analysis}
We believe that realistic computational and memory constraints are an important part of the continual learning setting. In terms of the memory requirements, \our{} needs to remember the centers $W_K^*$ and radii $\e_k^*$ from the previous task. Thus, assuming a network with $D$ parameters, we need to keep additional $2 \cdot D$ values in memory. This is the same memory constraint as for most of our baselines, e.g. online EWC, which remembers the weights and their importance for the last task.

In terms of computational constraints, propagating both activation centers and intervals through the network requires approximately $3$ times as many FLOPs as propagating just a single point. However, the main part of the training is the center optimization phase, that has no overhead as compared to the base network since we only need to push forward a single point (the centers). The interval propagation is only needed in the radii optimization phase that is a small part of the overall training. Thus, in practice it is on a similar order of complexity as, e.g. computing the Fisher information matrix in EWC. Finally, in practice even in the interval propagation phase the training time does not change significantly, as these operations are automatically parallelized on GPUs when using frameworks such as PyTorch.

\section{Experiments}
To verify the empirical usefulness of our method, we test it in three standard continual learning scenarios~\cite{DBLP:journals/corr/abs-1810-12488,vandeVen2019ThreeSF}: incremental task, incremental domain, and incremental class. In the incremental task setting, we create a separate output head for each task and only train a single head per task along with the base network. During inference, the head appropriate for the given task is selected. The incremental domain and incremental class scenarios are more challenging, as they only use a single head and the task identity is not provided during inference. In our case, this single head is created with either 2 (incremental domain) or 10 output classes (incremental class).

We run the experiments on four datasets commonly used for continual learning: MNIST, FashionMNIST, CIFAR-10 and CIFAR-100 split into sequences of tasks. For MNIST and FashionMNIST, we use a standard MLP architecture with 2 hidden layers of size $400$ as previously evaluated in~\cite{DBLP:journals/corr/abs-1810-12488}. For CIFAR-10 and CIFAR-100 we use the fairly simple AlexNet architecture~\cite{krizhevsky2012imagenet}. This choice is dictated by the infeasibility of implementing interval version of batch normalization which is present in most standard convolutional network architectures. In our experiments, we use the Avalanche \cite{lomonaco2021avalanche} library to facilitate easier and fairer comparisons. The code is available at \url{https://github.com/gmum/InterContiNet}.

We compare \our{} with a number of regularization-based approaches typically used in continual learning settings, i.e. EWC, Synaptic Intelligence, MAS, and LwF. As additional baselines, we present vanilla sequential training with SGD and Adam, and a simple $L_2$ regularization between new and old model parameters (i.e. all parameters have the same importance). Further training details are available in Appendix~\ref{sec:appendix-training-details}.

\subsection{MNIST and CIFAR Benchmarks}

\begin{table}[t]
\caption{The average accuracy across all five tasks of the split MNIST protocol, evaluated after learning the whole sequence. Each value is the average of five runs (with standard deviations).}
\label{tab:results_mnist}
\begin{center}
\begin{small}
\setlength{\tabcolsep}{0.15cm}
\begin{tabularx}{\linewidth}{p{2cm}x{1.75cm}x{1.75cm}x{1.75cm}}
\toprule
  Method & Incremental &  Incremental & Incremental \\
     & task &  domain &  class \\
\midrule
SGD         & 96.27 $\pm$ 0.38          & 64.58 $\pm$ 0.26          & 19.01 $\pm$ 0.04 \\
Adam        & 95.53 $\pm$ 3.16          & 59.32 $\pm$ 1.08          & 19.74 $\pm$ 0.01 \\
L2          & 96.31 $\pm$ 0.41          & 72.21 $\pm$ 0.17          & 18.88 $\pm$ 0.18 \\  
\midrule
EWC            & 97.01 $\pm$ 0.13          & 76.90 $\pm$ 0.41          & 18.90 $\pm$ 0.06    \\ 
oEWC      & 97.01 $\pm$ 0.13          & 77.02 $\pm$ 0.51          & 18.89 $\pm$ 0.07    \\
SI          & 96.19 $\pm$ 0.63                      & 80.62 $\pm$ 0.17                        & 17.94 $\pm$ 0.57    \\
MAS         & 96.52 $\pm$ 0.14                      & \bf 84.41 $\pm$ 0.42                        & 17.38 $\pm$ 4.19    \\ 
LwF             & 97.03 $\pm$ 0.05          & 82.76 $\pm$ 0.17          & \bf 49.37 $\pm$ 0.68    \\  
\bf \our{} & \bf 98.93 $\pm$ 0.05 & 77.77 $\pm$ 1.24 & 40.73 $\pm$ 3.26 \\  
\midrule
Offline
                    & 99.74 $\pm$ 0.03 & 99.03 $\pm$ 0.04 & 98.49 $\pm$ 0.02  \\
\bottomrule
\end{tabularx}
\vspace{-1.5em}
\end{small}
\end{center}
\end{table}

\begin{table}[t]
\caption{The average accuracy across all five tasks of the split FashionMNIST protocol, evaluated after learning the whole sequence. Each value is the average of five runs (with standard deviations).}
\label{tab:results_fashionmnist}
\begin{center}
\begin{small}
\setlength{\tabcolsep}{0.15cm}
\begin{tabularx}{\linewidth}{p{2cm}x{1.75cm}x{1.75cm}x{1.75cm}}
\toprule
  Method & Incremental &  Incremental & Incremental \\
     & task &  domain &  class \\
\midrule
SGD          & 92.22 $\pm$ 3.06          & 82.77 $\pm$ 0.44          & 19.91 $\pm$ 0.01 \\
Adam        & 88.13 $\pm$ 5.64          & 78.48 $\pm$ 0.47          & 19.96 $\pm$ 0.01 \\
L2          & 97.36 $\pm$ 0.17          & \bf 92.65 $\pm$ 0.09          & 26.91 $\pm$ 1.23 \\  
\midrule
EWC            & 97.53 $\pm$ 0.15          & 92.12 $\pm$ 0.18          & 19.90 $\pm$ 0.01    \\ 
oEWC      & 96.70 $\pm$ 0.57          & 88.83 $\pm$ 0.30          & 19.87 $\pm$ 0.02    \\
SI        & 97.00 $\pm$ 0.25                      & 91.45 $\pm$ 0.07                        & 19.97 $\pm$ 0.34    \\
MAS         & 97.43 $\pm$ 0.14                     & 91.74 $\pm$ 0.19                    & 10.00 $\pm$ 0.00    \\ 
LwF             & 98.10 $\pm$ 0.07          & 88.63 $\pm$ 0.12          & \bf 39.51 $\pm$ 1.45    \\  
\bf \our{} & \bf 98.37 $\pm$ 0.06 & \bf 92.65 $\pm$ 0.40 & 35.11 $\pm$ 0.02 \\  
\midrule
Offline
                    & 97.98 $\pm$ 0.05 & 96.39 $\pm$ 0.06 & 82.54 $\pm$ 0.13  \\
\bottomrule
\end{tabularx}
\end{small}
\end{center}
\vspace{-1em}
\end{table}

\begin{table}[t]
\caption{The average accuracy across all five tasks of Split CIFAR-10, evaluated after training AlexNet on the whole sequence. Each value is the average of five runs (with standard deviations).}
\label{tab:results_cifar10}
\begin{center}
\begin{small}
\setlength{\tabcolsep}{0.15cm}
\begin{tabularx}{\linewidth}{p{2cm}x{1.75cm}x{1.75cm}x{1.75cm}}
\toprule
  Method & Incremental &  Incremental & Incremental \\
     & task &  domain &  class \\
\midrule
SGD              & 64.74 $\pm$ 8.53          & 69.22 $\pm$ 0.55          & 15.56 $\pm$ 5.07 \\
\midrule
EWC                      &  67.33 $\pm$ 7.11          & 70.26 $\pm$ 0.66          & 11.83 $\pm$ 4.10    \\
oEWC               & 64.59 $\pm$ 7.97          & 65.97 $\pm$ 8.97          & 15.49 $\pm$ 5.01    \\
SI               & 67.26 $\pm$ 7.77                      & 69.69 $\pm$ 0.73                        & 17.38 $\pm$ 4.13    \\
MAS              & 66.20 $\pm$ 9.29                      &  72.54 $\pm$ 0.59                        & 11.79 $\pm$ 4.01    \\ 
 
LwF & {\bf 93.03} $\pm$ 0.28 & {\bf 76.97} $\pm$ 0.91 & 13.89 $\pm$ 5.33 \\  
\bf \our{} & 72.64 $\pm$ 1.18 & 69.48 $\pm$ 1.36 & {\bf 19.07} $\pm$ 0.15 \\  

\midrule
Offline & 93.11 $\pm$ 0.18 & 90.54 $\pm$ 0.32 & 82.72 $\pm$ 0.09  \\
\bottomrule
\end{tabularx}
\end{small}
\end{center}
\vspace{-1em}
\end{table}

Results presented in Tables~\ref{tab:results_mnist} and~\ref{tab:results_fashionmnist} indicate that in the incremental task setup \our{} has better average accuracy across all five tasks than all the other continual learning baselines. We hypothesize that a multi-head setup facilitates our 2-phase training procedure, as the output heads in the last layer are independent and do not have to be constrained inside intervals.
In the incremental domain setup, our method has a similar performance to EWC. However, the advantage of \our{} is that we can guarantee a maximal level of forgetting on the past tasks when training on new data.

The most significant difference is visible in the incremental class scenario, where LwF and \our{} perform much better than typical regularization-based approaches. We can partly explain this behavior with the particular setup, where the final number of output classes is known at the onset of the experiment. While such information induces an indirect regularization effect on the softmax layer in LwF and \our{}, it has smaller effect on other techniques. This dichotomy highlights the fundamental difference between LwF's knowledge distillation approach and all the other baselines, which are prior-focused.

Finally, we extend our investigation to CIFAR-10 with the AlexNet. We choose this architecture as an example of a convolutional network without batch normalization, which is problematic in our setting, as we explain later in Section \ref{sec:limitations}. As shown in Table \ref{tab:results_cifar10}, in this setting \our{} still performs comparably to other parameter regularization methods. At the same time, we observe that LwF tends to significantly outperform other methods, especially in the incremental task setting. We hypothesize that the regularization through functional penalties (KL divergence between outputs) rather than direct parameter regularization works much better for this particular problem \cite{DBLP:conf/iccvw/OrenW21}. In the end, we show that \our{} is able to perform similarly to other methods in the same family while providing guarantees.

\subsection{Analysis of the Parameter Region Size}

Although the stability-plasticity dilemma is a crucial problem in all CL methods, in \our{} its impact can be directly observed by investigating the region of valid parameters $\Theta$. The bigger the region, the higher the plasticity, as we are able to use more parameter combinations and thus represent more functions. At the same time, in order to achieve stability and get guarantees on forgetting, we need to restrict the size of this region. Here, we investigate the question of whether the interval training will at some point lead to an area of zero volume, resulting in a state without any plasticity and effectively stopping the learning\footnote{We note that this risk also occurs for other CL methods (e.g. accumulating high EWC penalties which prohibit learning or reducing the percentage of trainable parameters in PackNet to zero).}.

\begin{figure}[t]
    \centering
    \includegraphics[width=0.9\linewidth]{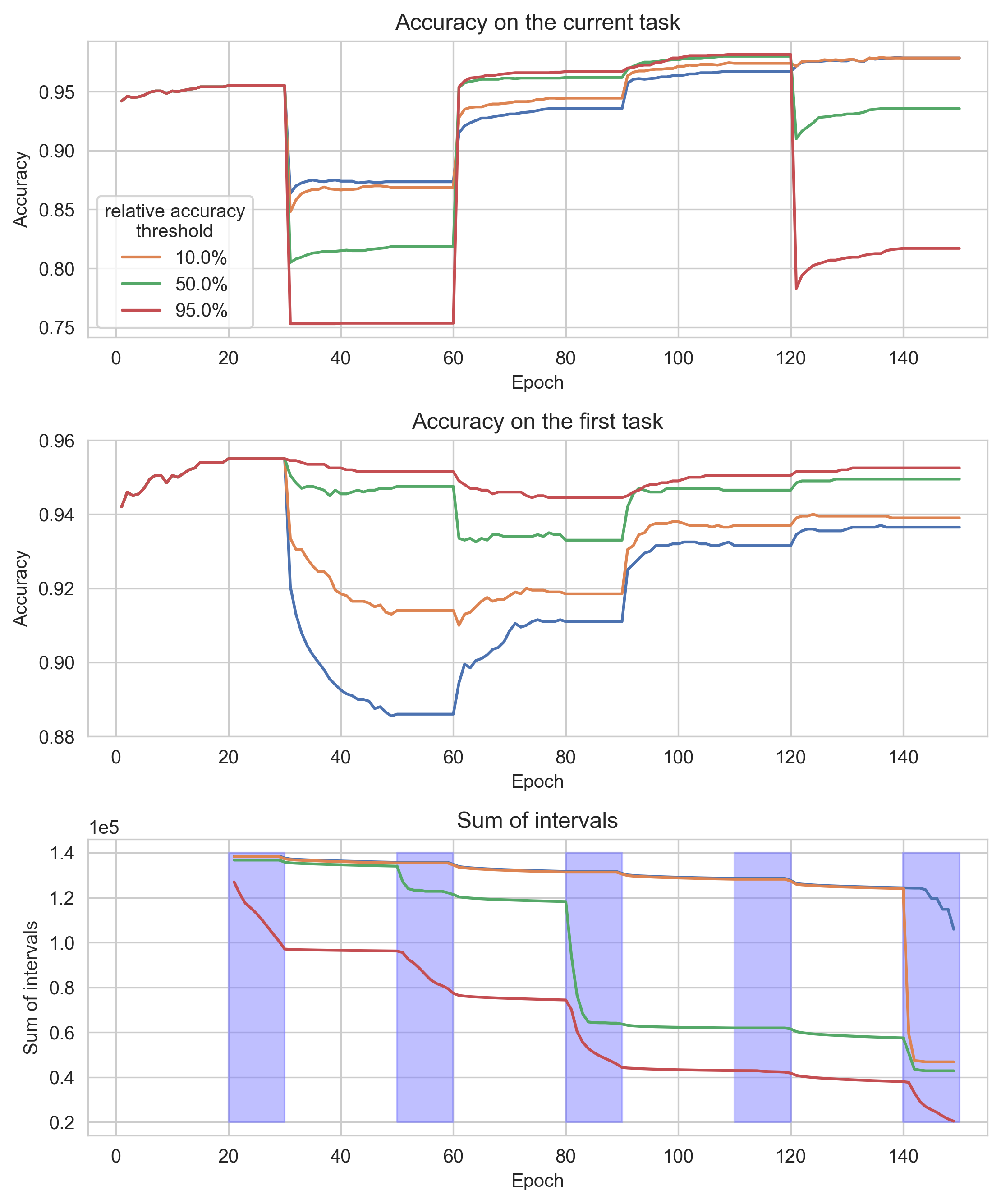}
    \caption{\our{} trained on split-Fashion MNIST in the incremental domain setting, with different values of the $\mathrm{acc\_thresh}$ hyperparameter. We show accuracy on the current task (top, representing plasticity) and the first task (middle, representing stability)  to show how the hyperparameter choices affect the final results. Additionally, the bottom plot shows how the valid parameter region changes throughout the training -- highlighted areas denote the radii optimization phase, which shrinks this region most effectively.}
    \label{fig:thresholds}
    \vspace{-1em}
\end{figure}

In practice, we control the pace at which the volume decreases with hyperparameters, e.g. by choosing the worst-case accuracy threshold -- if we set this threshold high, the region collapses quicker. Here, we conduct additional studies to better understand how the threshold hyperparameter impacts the results and the dynamics of the region of valid parameters. In Figure~\ref{fig:thresholds} we present the accuracy curves for training \our{} on FashionMNIST in the incremental domain setting, using different values of the accuracy threshold hyperparameter. This threshold controls the percentage of the correctly classified examples that we guarantee will not be forgotten throughout the training. For example, in Figure~\ref{fig:thresholds} the method obtains $95.3\%$ accuracy at the end of the first task. If we set $\mathrm{acc\_thresh} = 95\%$, then we can guarantee that at no point during training the accuracy on the first task will be lower than $95\% \cdot 95.3\% \approx 90.5\%$. 

We see that this hyperparameter indeed controls the plasticity-stability trade-off, with larger values contributing to less forgetting, but also less plasticity on the subsequent tasks. Note that in practice the obtained accuracy is much higher than the one guaranteed by the worst-case accuracy. For example, even with $\mathrm{acc\_thresh} = 0.05$ we are able to maintain first-task accuracy above $90\%$ through the whole training. However, this performance is not guaranteed and might deteriorate with further training.

In the bottom part of Figure~\ref{fig:thresholds}, we plot how the valid parameter region changes throughout the training. Since the volume can get easily degenerated (e.g. if we prohibit even one parameter from changing), we summarize it by reporting the sum of all radii. Confirming the intuition, the valid region for models with higher $\mathrm{acc\_thresh}$ shrinks much faster, while for lower thresholds the impact is not as noticeable. In the end, we observe that even in the most restrictive setting, there is still room for learning.

\begin{figure}[t]
    \centering
    \includegraphics[width=0.96\linewidth]{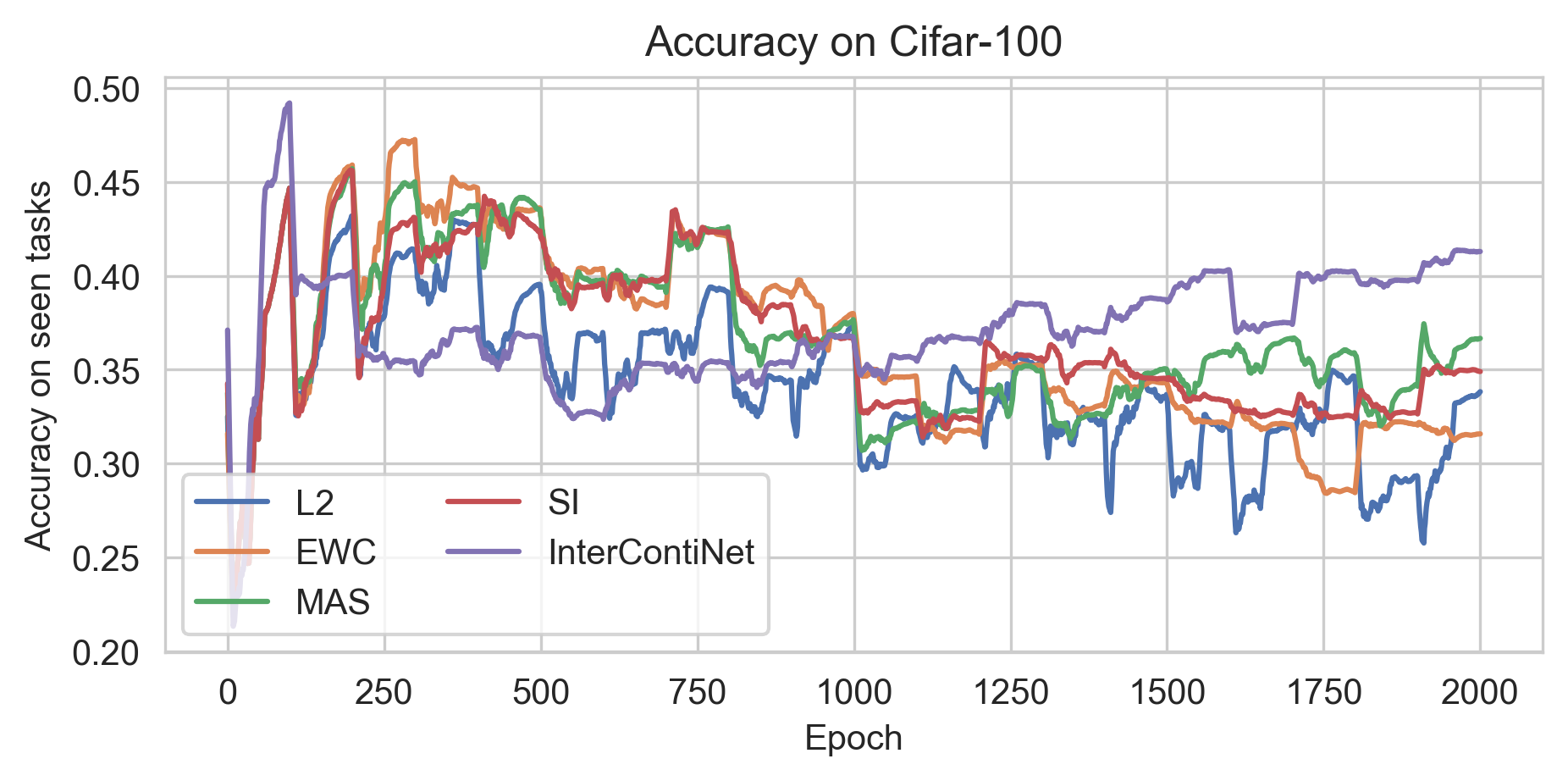}
    \caption{Results for parameter regularization methods for split-CIFAR-100 with 20 tasks, averaged over 3 seeds. \our{} is still able to learn new tasks at the end of the training.}
    \label{fig:results_cifar100}
\end{figure}

Finally, we extend the experiments on the collapse of the valid parameter region to a setting with a higher number of tasks. We use split-CIFAR-100 with $20$ tasks of $5$ classes and focus on the incremental task scenario. The results in Figure~\ref{fig:results_cifar100} show that \our{} performs on similar scale as other parameter regularization methods and is still able to learn new tasks at the end of the training. We consider this as additional evidence that interval constraints do not necessarily lead to faster plasticity collapse than soft regularization.
Additionally, we ran LwF in this setting and found out that it significantly outperforms the parameter regularization methods by a very wide margin. We show the additional results in Figure~\ref{fig:results_cifar100_full} in Appendix~\ref{sec:appendix-training-details}.
This finding is consistent with the split-CIFAR-10 results and recent analysis~\cite{DBLP:conf/iccvw/OrenW21}, and is a strong motivation to consider functional regularization in future work.

\section{Limitations \& Future Work}
\label{sec:limitations}

Although reformulating continual learning as a sequential contraction of the model's parameter space creates a viable theoretical framework, \our{} has several limitations that would require further research to be addressed.

To begin with, interval arithmetic cannot be easily combined with batch normalization layers. As we cannot foresee the running statistics of activations at future inference, we cannot guarantee the bounds of the model's output. This limitation is problematic due to the ubiquitousness of batch normalization in contemporary neural network models. We plan to evaluate \our{} with different normalization schemes (e.g. layer \cite{DBLP:journals/corr/BaKH16} or group normalization \cite{DBLP:journals/ijcv/WuH20}), where providing guarantees is still possible. Another option worth exploring is freezing the batch normalization statistics after the first task.

On the other hand, the no-forgetting guarantees are also limited in several ways. Firstly, we concentrate on the worst-case scenario, i.e.~we maintain that the worst solution in the allowed parameter space will not get even worse. In practice, we evaluate the model at the interval centers, which can leave a lot of leeway for the accuracy to fluctuate above the defined threshold. Secondly, these guarantees are computed on the training set. Depending on the amount of training--test distribution shift, a non-negligible change in the model's response can be unaccounted for in our worst-case scenario.

Another drawback of hard constraints is the risk of reducing the viable parameter space to a zero-volume region when exposed to very long sequences of tasks or very deep architectures. While it is a very reasonable concern, the goal of our empirical evaluation was to show that \our{}'s parameter space does not degenerate so rapidly in typical experimental setups. Nevertheless, based on Theorem~\ref{theorem:single_data}, it is theoretically possible to devise a malicious dataset that would result in arbitrarily large forgetting or complete loss of plasticity.
Lastly, our evaluation is limited to classification problems only. However, extensions to other types of problems (e.g. regression) could be made with simple adjustments to the loss function.

To summarize, in this work we proposed \ourtraining{}, a general approach to continual learning. We constrain the parameter search within the hyperrectangle representing the region of valid solutions for the previous task. We show that by applying interval arithmetic to operations present in standard neural networks in this setting we can derive a tractable and differentiable upper bound of the CL objective. Then we proposed and empirically evaluated \our{}, a CL algorithm that minimizes this upper bound to set hard limits on forgetting.

\section{Acknowledgements}

The work of M. Wołczyk, K. Piczak, J. Tabor, and T. Trzciński was supported by Foundation for Polish Science (grant no POIR.04.04.00-00-14DE/18-00) carried out within the Team-Net program co-financed by the European Union under the European Regional Development Fund.
The work of P. Spurek was supported by the National Centre of Science (Poland) Grant No. 2019/33/B/ST6/00894. The work of T. Trzciński was supported by the National Centre of Science (Poland) Grant No. 2020/39/B/ST6/01511. The work of B. Wójcik was supported by the Priority Research Area DigiWorld under the program Excellence Initiative - Research University at the Jagiellonian University in Kraków. 

\bibliography{ref_1}
\bibliographystyle{icml2022}

\newpage
\appendix
\onecolumn

\section{Training Details}
\label{sec:appendix-training-details}

\paragraph{Data}
We use MNIST, FashionMNIST, CIFAR-10, and CIFAR-100 datasets with original training and testing splits. We do not use any form of data augmentation or normalization. We split the datasets into 5 tasks containing data from classes: [0, 1], [2, 3], [4, 5], [6, 7], [8, 9]. We split CIFAR-100 into 20 tasks with 5 classes.

\paragraph{Architecture}
The baseline model for MNIST and FashionMNIST is a MLP with 2 hidden layers of 400 units and a linear output layer (single- or multi-head). We use ReLU activations between layers. For CIFAR-10 and CIFAR-100 we use the AlexNet~\cite{krizhevsky2012imagenet} architecture adapted to $32 \times 32$ inputs. For \our{} we simply replace standard fully-connected layers with its interval variants, with the linear layers of the heads in the incremental task setting being an exception. That is, those layers also accept the lower and upper bounds as input, but we do not add $\mu$ and $\nu$ for them, as the weights for these layers are task-specific.

\paragraph{Hyperparameters}
In the MNIST and Fashion-MNIST experiments we use batch size of 128, 30 epochs of training for each task (i.e. 150 in total) and vanilla cross entropy loss. In the offline (upper bound approximation) learning experiment we use a single combined task with 100 epochs of training. We repeat each experiment 5 times with different seeds to get the mean and the standard deviation estimates.

In the SGD baseline and every continual learning baseline we use the SGD optimizer without momentum. The learning rate is set to $0.001$ for  MNIST and FashionMNIST experiments, and $0.01$ for split-CIFAR-10 and split-CIFAR-100. In CIFAR experiments we additionally decrease the learning rate $10$x in the 50th and the 90th epoch. 

Tables \ref{tab:baseline_mnist_hps}, \ref{tab:baseline_c10_hps}, \ref{tab:baseline_c100_hps} contain chosen hyperparameters for MNIST, CIFAR-10 and CIFAR-100 respectively, for the baseline methods.

\begin{table}[h]
\caption{Hyperparameters for the baseline methods used in the MNIST and FashionMNIST experiments.}
\label{tab:baseline_mnist_hps}
\begin{center}
\begin{small}
\setlength{\tabcolsep}{0.15cm}
\begin{tabularx}{0.47\textwidth}{p{3.5cm}x{2.5cm}x{1.5cm}}
\toprule
Method & Hyperparameter & Value \\
\midrule
L2 & $\lambda$ & 0.1 \\
EWC & $\lambda$ & 2048 \\
Synaptic Intelligence & $\lambda$ & 2048 \\
MAS & $\lambda$ & 1 \\
\multirow{2}{=}{LwF} & temperature & 0.5 \\
& $\alpha$ & 0.5 \\
\bottomrule
\end{tabularx}
\end{small}
\end{center}
\end{table}

\begin{table}[h]
\caption{Hyperparameters for the baseline methods used in the CIFAR-10 experiments.}
\label{tab:baseline_c10_hps}
\begin{center}
\begin{small}
\setlength{\tabcolsep}{0.15cm}
\begin{tabularx}{0.47\textwidth}{p{3.5cm}x{2.5cm}x{1.5cm}}
\toprule
Method & Hyperparameter & Value \\
\midrule
L2 & $\lambda$ & 0.05 \\
EWC & $\lambda$ & 0.05 \\
Synaptic Intelligence & $\lambda$ & 1. \\
MAS & $\lambda$ & 0.005 \\
\multirow{2}{=}{LwF} & temperature & 1. \\
& $\alpha$ & 1. \\
\bottomrule
\end{tabularx}
\end{small}
\end{center}
\end{table}

\begin{table}[h]
\caption{Hyperparameters for the baseline methods used in the CIFAR-100 experiments.}
\label{tab:baseline_c100_hps}
\begin{center}
\begin{small}
\setlength{\tabcolsep}{0.15cm}
\begin{tabularx}{0.47\textwidth}{p{3.5cm}x{2.5cm}x{1.5cm}}
\toprule
Method & Hyperparameter & Value \\
\midrule
L2 & $\lambda$ & 0.005 \\
EWC & $\lambda$ & 0.0005 \\
Synaptic Intelligence & $\lambda$ & 0.1 \\
MAS & $\lambda$ & 0.0005 \\
\multirow{2}{=}{LwF} & temperature & 1. \\
& $\alpha$ & 0.5 \\
\bottomrule
\end{tabularx}
\end{small}
\end{center}
\end{table}

Since $\hat{\ell}$ and $\ell$ have values differing by orders of magnitude, we use a separate learning rates for training centers $W$ and radii $\e$. We tune these two learning rates and the $\mathrm{acc\_thresh}$ hyperparameter and list them in Table \ref{tab:interval_hyperparams}. Similarly as in the baselines, we use SGD without momentum. After each task the $\nu$ parameters are reset to the value of $5.0$.

\begin{table}[h]
\caption{Hyperparameters for \our{} on each dataset and setting combination.}
\label{tab:interval_hyperparams}
\begin{center}
\begin{small}
\setlength{\tabcolsep}{0.10cm}
\begin{tabularx}{0.8\textwidth}{p{1.5cm}x{1.25cm}x{1.5cm}x{3.0cm}x{3.0cm}x{1.5cm}}
\toprule
 Dataset & Setting & $\mathrm{acc\_thresh}$ & Center optimization LR & Radii Optimization LR & Initial radius \\
\midrule
\multirow{3}{=}{MNIST} & IT & $0.9$ & $1.0$ & $100$ & 1 \\
 & ID & $0.8$ & $1.0$ & $1000$ & $1$ \\
 & IC & $0.8$ & $0.001$ & $1.0$ & $1$ \\
 \midrule
\multirow{3}{=}{FashionMNIST} & IT & $0.9$ & $0.001$ & $100$ & 1 \\
 & ID & $0.8$ & $0.001$ & $10$ & $1$ \\
 & IC & $0.9$ & $0.001$ & $0.1$ & $1$ \\
 \midrule
\multirow{3}{=}{CIFAR-10} & IT & $0.1$ & $0.1$ & $0.1$ & $0.1$ \\
 & ID & $0.1$ & $0.1$ & $10^{-10}$ & $0.1$ \\
 & IC & $0.1$ & $0.1$ & $10^{-20}$ & $0.1$ \\
 \midrule
\multirow{1}{=}{CIFAR-100} & IT & $0.05$ & $0.001$ & $100$ & $0.5$ \\
\bottomrule
\end{tabularx}
\end{small}
\end{center}
\end{table}

\begin{figure}[t]
    \centering
    \includegraphics[width=0.9\linewidth]{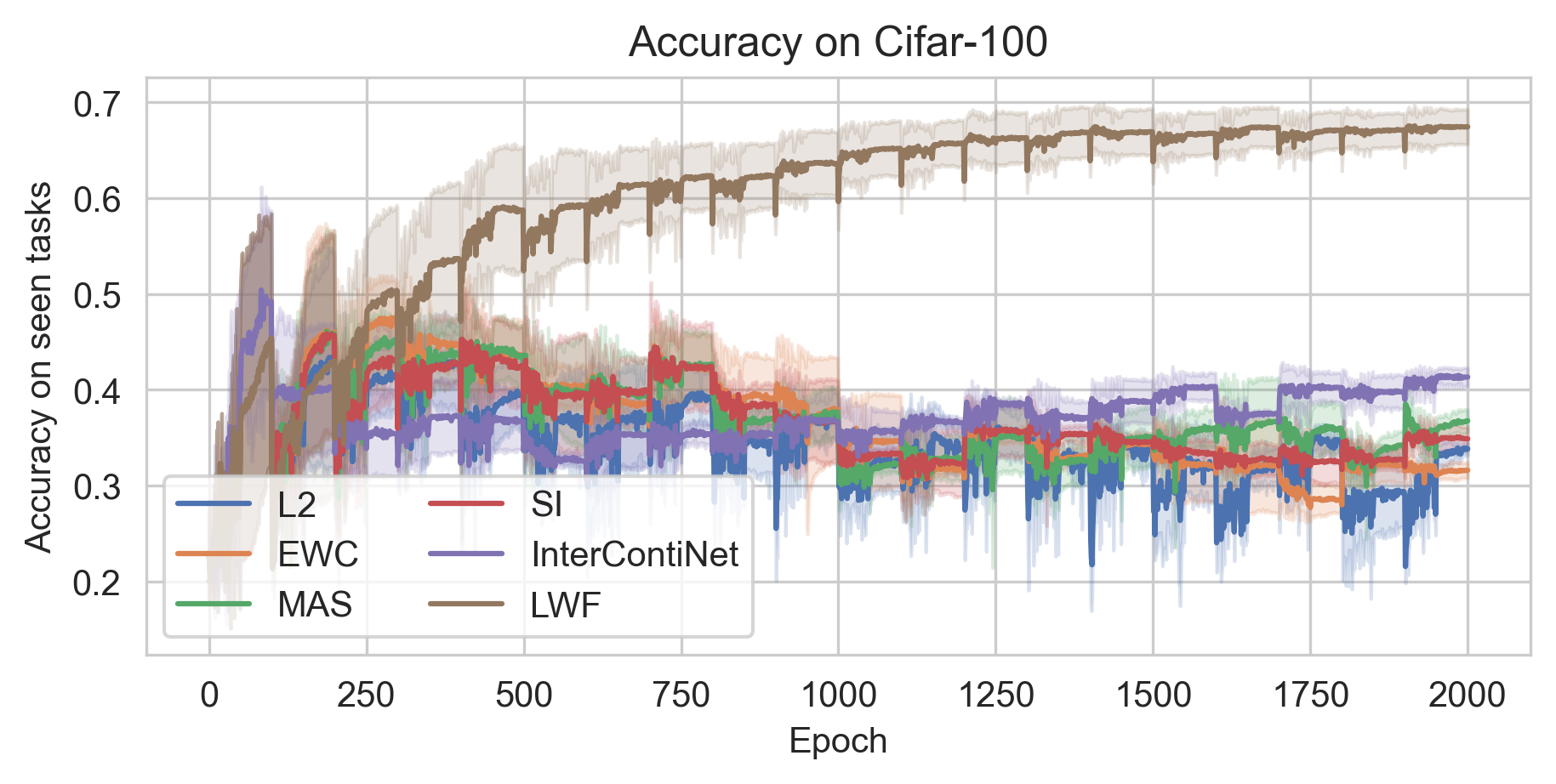}
    \caption{Full results for CIFAR-100, computed from 3 seeds. The confidence intervals represent standard deviation.}
    \label{fig:results_cifar100_full}
\end{figure}

\paragraph{Additional results} In Figure~\ref{fig:results_cifar100_full} we show full results for split CIFAR-100, including LwF and standard deviation confidence regions.

\section{Formal Presentations of the Results}\label{sec:theorem}

In \cite{DBLP:conf/icml/KnoblauchHD20} authors develop a
theoretical approach that explains why catastrophic forgetting is a challenging problem. Furthermore, the authors show that optimal Continual Learning algorithms generally have to solve a NP-HARD problem.
Thanks to interval arithmetic we consider only hyperrectangles. By constraining the CL problem to interval architecture we reduce the complexity to a polynomial one. In the experimental section, we show that such a solution also gives excellent practical results.

Roughly speaking, the main problem with the continual learning setting is that areas in the parameter space that guarantee each task's performance can have arbitrary shape, see Fig.~\ref{fig:teaser}. Thus, our primary goal is to detect weight in the intersection of such regions. 
Even simple linear models together with an intuitively appealing upper bound on the prediction error as optimality criterion are NP-HARD (see \cite{DBLP:conf/icml/KnoblauchHD20}[Example 1]).

In our framework, we constrain the areas in the parameter space which guarantee the solution of each task to hyperrectangles, see Fig.~\ref{fig:imp_dec_1_b} (bottom). Thus, the intersection of hyperrectangles is a hyperrectangle, and we can produce that intersection in a polynomial time. 

For convenience of the reader we introduce the notation from \cite{DBLP:conf/icml/KnoblauchHD20}.  
We deal with random variables $\bf X_t$, $\bf Y_t$. Realizations of the random variable $\bf X_t$ live on the input space $\mathcal{X}$ and provide information about the random outputs $\bf Y_t$
with realizations on the output space $\mathcal{Y}$.
Throughout, $P(A)$ denotes the collection of all probability measures on $A$.

\begin{definition} (Tasks). For a number $T \in \N$ and random
variables ${(X_t,Y_t)}^T_{t=1}$ defined on the same spaces $X$
and $Y$, the random variable $(X_t,Y_t)$ is the $t$-th task, and
its probability space is $(X_t \times Y_t, \Sigma, \P_t)$, where $\Sigma$ is a $\sigma$-algebra and $ \P_t$ a probability measure on $X_t \times Y_t \subset \mathcal{X} \times \mathcal{Y}$.
\end{definition}

Given a sequence of samples from task-specific random
variables, a CL algorithm sequentially learns a predictor for $Y_t$ given $X_t$. This means that there will be some hypothesis
class $F_{\Theta}$ consisting of conditional distributions which allow
(probabilistic) predictions about likely values of $Y_t$.

\begin{definition} (CL hypothesis class). The CL hypothesis
class $F_{\Theta}$ is parameterized by $\Theta$: For any $f \in F_{\Theta}$, there exists a $\theta \in \Theta$ so that $f_{\theta} = f$. More precisely, $F_{\Theta} \subset P(\mathcal{Y})^{\mathcal{X}}$  if the task label is not conditioned on. Alternatively, $F_{\Theta} \subset P(\mathcal{Y})^{X \times \{1,2,...T\}}$
if the label is conditioned on.
\end{definition}

\begin{definition}
 (Continual Learning). For a CL hypothesis
class $F_{\Theta}, T \in \N$ and any sequence of probability measures $\{\hat \P_t\}^T_{t=1}$ such that $\hat \P_t \in \mathcal{P}(\mathcal{Y}_t)^{\mathcal{X}_t} \subseteq P(Y)^{\mathcal{X}}$ , CL
algorithms are specified by functions
$$
\begin{array}{c}
\hat{\mathcal{A}_{I}} : \Theta \times I \times \mathcal{P}(\mathcal{Y})^{\mathcal{X}} \to I \\
\hat{\mathcal{A}_{\Theta}} : \Theta \times I \times P(\mathcal{Y})^{\mathcal{X}} \to \Theta,
\end{array}
$$
where $I$ is some space that may vary between different CL
algorithms. Given $A_I$ and $A_{\theta}$ and some initializations $\theta_0$ and $I_0$, CL defines a procedure given by
$$
\begin{array}{c}
\theta_1 = \hat {\mathcal{A}}_{\Theta}(\theta_0,I_0,\hat{\P_1})\\
I_1 = \hat{\mathcal{A}}_{I}(\theta_1,I_0,\hat{\P}_1)\\
\theta_2 = \hat {\mathcal{A}}_{\Theta}(\theta_1,I_1,\hat{\P}_2)\\
I_2 = \hat{\mathcal{A}}_{I}(\theta_2,I_1,\hat{\P}_2)\\
\ldots \\
\theta_T = \hat {\mathcal{A}}_{\Theta}(\theta_{T-1},I_{T-1},\hat{ \P}_{T})\\
I_T = \hat{\mathcal{A}_{I}}(\theta_T,I_{T-1},\hat{\P}_T) \quad \ \\
\end{array}
$$

\end{definition}

In \cite{DBLP:conf/icml/KnoblauchHD20} authors introduce very flexible  formalism. In
particular, all that it needs is an arbitrary binary-valued optimality criterion $\mathcal{C}$, whose function is to assess whether or not information of a task has been retained ($\mathcal{C} = 1$) or forgotten $(\mathcal{C} = 0)$. According to this formalism, a CL algorithm avoids catastrophic forgetting (as judged by the criterion $\mathcal{C}$) if and only if its output at task $t$ is guaranteed to satisfy $\mathcal{C}$ on all previously seen tasks. In this context, different ideas
about the meaning of catastrophic forgetting would result
in different choices for $\mathcal{C}$. As we will analyze CL with the tools of set theory, it is also convenient to define the function $SAT$, which maps from task distributions into the subsets consisting of all values in $\Theta$ which satisfy the criterion $\mathcal{C}$ on the given task.

\begin{definition} 
For an optimality criterion
$ \mathcal{C}: \Theta \times \mathcal{P}(\mathcal{X} \times \mathcal{Y}) \to \{0,1\} $ and a set $\mathcal{Q} \subseteq  \mathcal{P}(\mathcal{X} \times \mathcal{Y})$  of task distributions,
the function $SAT: \mathcal{P}(\mathcal{X} \times \mathcal{Y}) \to 2^{\Theta}$  defines the subset of $\Theta$ which satisfies $\mathcal{C}$ and is given by
$$
SAT(\hat \P ) = \{\theta \in  \Theta : \mathcal{C}(\theta, \hat \P ) = 1\}.
$$

The collection of all possible sets generated by SAT is
$$
SAT_{\mathcal{Q}} = \{SAT(\hat \P) : \hat \P \in \mathcal{Q}\}
$$
and the collection of finite intersections from SATQ is
$$
SAT_{\cup} = \{ \cup_{i=1}^t A_i : A_i \in SAT_{\mathcal{Q}}, 1 \leq i \leq t \mbox{ and } 1 \leq t \leq T, T \in \N \}.
$$

\end{definition}

Now we are ready to introduce our theorem. We assume that for a task, we can train our model in intervals. In the experimental section, we show that such architecture can be efficiently trained.

\begin{theorem}
Take $F_{\Theta}$ to be the collection of MLP neural networks  $\phi_{\theta}$ 

with inputs on $\mathcal{X}$ and outputs on $\mathcal{Y} \subset \R$ linked through the
coefficient vector $\theta \in \Theta$. Further, let $\mathcal{Q}$ be the collection of empirical measures
$$
\hat m^{t}(y,x) = \frac{1}{n_t} \sum_{i=1}^{n_t} \delta_{(y_i^t,x_i^t)} (y,x)  
$$
whose $n_t \in \N$ atoms $\{ (y_i^t,x_i^t) \}_{i=1}^{n_t}$ represent t-th task.

We assume that for all $i = 1, \ldots, n_t $ and $\e \geq 0$ exist $[\barbelow W_i, \bar W_i]$ such that for all $\theta \in [\barbelow W_i, \bar W_i]$ we have $| y_i^t - \phi_{\theta}(x_i^t) | \leq \e$. 

Further, define for $\e \geq 0$ and all $\hat \P \in \mathcal{Q}$ the criterion
$$
\mathcal{C}(\theta,\hat \P) = \left\{
\begin{array}{ll}
     1, & \mbox{ if } | y_i^t - \phi_{\theta}(x_i^t) | \leq \e \mbox{ and } \theta \in [\barbelow W_i, \bar W_i] \mbox{ for all } i=1 \ldots n_t \\[0.8ex]
     0, & \mbox{ otherwise }.
\end{array}
\right.
$$

Then,
$$
\bigcap_{i = 1}^{n_t} [\barbelow W_i, \bar W_i] \subset SAT(\hat \P),
$$
\end{theorem}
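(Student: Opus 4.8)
The statement is essentially a matter of unwinding the two definitions involved, so the plan is short. First I would fix an arbitrary $\theta \in \bigcap_{i=1}^{n_t}[\barbelow W_i, \bar W_i]$ and argue that it lies in $SAT(\hat\P)$, i.e. that $\CL(\theta,\hat\P) = 1$. By the definition of the criterion $\CL$, this amounts to verifying, for every atom index $i = 1,\ldots,n_t$, two things: (a) $\theta \in [\barbelow W_i, \bar W_i]$; and (b) $|y_i^t - \phi_\theta(x_i^t)| \leq \e$.

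Claim (a) is immediate: membership in the intersection $\bigcap_i [\barbelow W_i, \bar W_i]$ means, by the very definition of intersection, that $\theta \in [\barbelow W_i, \bar W_i]$ for each $i$. Claim (b) then follows from the standing hypothesis of the theorem, which asserts that for every $i$ and every $\theta' \in [\barbelow W_i, \bar W_i]$ one has $|y_i^t - \phi_{\theta'}(x_i^t)| \leq \e$; applying this with $\theta' = \theta$, which is legitimate by (a), gives exactly (b). Since (a) and (b) hold for all $i = 1,\ldots,n_t$, the definition of $\CL$ yields $\CL(\theta,\hat\P) = 1$, hence $\theta \in SAT(\hat\P)$. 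As $\theta$ was an arbitrary element of the intersection, the inclusion $\bigcap_{i=1}^{n_t}[\barbelow W_i, \bar W_i] \subset SAT(\hat\P)$ follows.

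There is no genuine obstacle inside this argument — it is a direct chase through the definitions of $\CL$ and $SAT$. The substantive content sits entirely in the hypothesis that per-sample bounding hyperrectangles $[\barbelow W_i, \bar W_i]$ with the stated error-control property exist, which is precisely where interval arithmetic enters: each such box is read off from the interval-propagation map $z_L(x,\Theta)$ appearing in Theorem~\ref{theorem:single_data}, and the $\e$-closeness of $\phi_\theta(x_i^t)$ to $y_i^t$ is the regression analogue of the worst-case loss bound established there. The one remark worth making explicitly in the write-up is that $\bigcap_i[\barbelow W_i, \bar W_i]$ is again a hyperrectangle, which is what makes it computable in polynomial time and is the point at which the construction sidesteps the NP-hardness of \citet{DBLP:conf/icml/KnoblauchHD20}; but this fact is not needed for the inclusion itself.
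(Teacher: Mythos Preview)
Your proof is correct and is essentially the same argument as the paper's, just phrased element-wise rather than set-theoretically: the paper expands $SAT(\hat\P)$ as an intersection of the error-control half-spaces with $\bigcap_i[\barbelow W_i,\bar W_i]$ and then uses the hypothesis to observe that the intersection of boxes already sits inside the other pieces, while you pick an arbitrary $\theta$ in the intersection and verify $\CL(\theta,\hat\P)=1$ directly. Your closing remark that the intersection is again a hyperrectangle (and hence polynomial to compute) matches the paper's final sentence as well.
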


\begin{proof}
Then, it is straightforward to see that
$$
\begin{array}{c}
SAT(\hat \P) = \{ \theta \in \Theta : y_i^t - \phi_{\theta}(x_i^t) \leq \e \mbox{ and }  y_i^t - \phi_{\theta}(x_i^t) \geq - \e \mbox{ and } \theta \in [\barbelow W_i, \bar W_i],  \mbox{ for all } i=1 \ldots n_t \} = \\[0.8ex]
= \left( \cap_{i=1}^{n_t}  \{ \theta \in \Theta : y_i^t - \phi_{\theta}(x_i^t) \leq \e \} \right) \cap \left( \cap_{i=1}^{n_t}  \{ \theta \in \Theta : y_i^t - \phi_{\theta}(x_i^t) \geq -\e \} \right) \cap (\cap_{i = 1}^{n_t} [\barbelow W_i, \bar W_i] ) \\
\supseteq \cap_{i = 1}^{n_t} [\barbelow W_i, \bar W_i] . 
\end{array}
$$
The intersection of hyperrectangles is a hyperrectangle.
\end{proof}

Because the rectangle intersection algorithm can be performed in polynomial time, our solution can be found in polynomial time as well. Furthermore, intersection of hyperrectangles from all tasks $\cap_{i = 1}^{n_t} [\barbelow W_i, \bar W_i]$ is subset $SAT(\hat \P)$, we have guaranteed not to forget previous tasks.


\end{document}